\documentclass[twoside]{article}

\usepackage{hyperref} % put this in your preamble

\usepackage[ruled,vlined]{algorithm2e}
\usepackage[authoryear]{natbib}
\usepackage[preprint]{aistats2026}
\usepackage{microtype}
\usepackage{graphicx}
\usepackage{subfigure}
\usepackage{booktabs} % for professional tables
\usepackage{multirow} 

\usepackage{amsmath}
\usepackage{amssymb}
\usepackage{mathtools}
\usepackage{amsthm}

\usepackage[capitalize,noabbrev]{cleveref}

\theoremstyle{plain}
\newtheorem{theorem}{Theorem}[section]

\newtheorem{lemma}[theorem]{Lemma}

\theoremstyle{definition}
\newtheorem{definition}[theorem]{Definition}

\theoremstyle{remark}

\newcommand\R{\mathbb{R}}

\usepackage[most]{tcolorbox} 

\definecolor{lightgreen}{RGB}{220, 255, 220}
\definecolor{lightpink}{RGB}{255, 220, 230}

\usepackage{tikz}
\usetikzlibrary{arrows.meta,positioning,calc}
\usepackage{subcaption} % for subfigure
\usepackage[framemethod=TikZ]{mdframed}
\usepackage[framemethod=TikZ]{mdframed}
\definecolor{lightGray}{gray}{0.97}
\definecolor{midGray}{gray}{0.40}
\definecolor{lightYellow}{RGB}{254,254,239}
\definecolor{darkGray}{rgb}{0.45,0.45,0.45}
\definecolor{darkerGray}{rgb}{0.3,0.3,0.3}
\definecolor{purpleblue}{RGB}{90,96,136}
\mdfdefinestyle{myFrameStyle}{%
  linecolor=purpleblue,
  linewidth=0.7pt,
  roundcorner=3pt,
  skipabove=5pt,
  skipbelow=0pt,
  innertopmargin=5pt,
  innerbottommargin=5pt,
  innerrightmargin=5pt,
  innerleftmargin=5pt,
  leftmargin=0pt,
  rightmargin=0pt,
}

\begin{document}

% If your paper is accepted and the title of your paper is very long,
% the style will print as headings an error message. Use the following
% command to supply a shorter title of your paper so that it can be
% used as headings.
%
%\runningtitle{I use this title instead because the last one was very long}

% If your paper is accepted and the number of authors is large, the
% style will print as headings an error message. Use the following
% command to supply a shorter version of the author names so that
% they can be used as headings (for example, use only the surnames)
%
%\runningauthor{Surname 1, Surname 2, Surname 3, ...., Surname n}

\twocolumn[

\aistatstitle{Preconditioned Attention: Enhancing Efficiency in Transformers}

\aistatsauthor{Hemanth Saratchandran}

\aistatsaddress{Australian Institute for Machine Learning (AIML), Adelaide University \\
\texttt{hemanth.saratchandran@adelaide.edu.au}}]

\begin{abstract}
Central to the success of Transformers is the attention block, which effectively models global dependencies among input tokens associated to a dataset. However, we theoretically demonstrate that standard attention mechanisms in transformers often produce ill-conditioned matrices with large condition numbers. This ill-conditioning is a well-known obstacle for gradient-based optimizers, leading to inefficient training. To address this issue, we introduce preconditioned attention, a novel approach that incorporates a conditioning matrix into each attention head. Our theoretical analysis shows that this method significantly reduces the condition number of attention matrices, resulting in better-conditioned matrices that improve optimization. Conditioned attention serves as a simple drop-in replacement for a wide variety of attention mechanisms in the literature. We validate the effectiveness of preconditioned attention across a diverse set of transformer applications, including image classification, object detection, instance segmentation, long sequence modeling and language modeling.
\end{abstract}

\section{Introduction}

\begin{figure*}[ht!]
    \centering
    \includegraphics[width=1.0\linewidth]
    {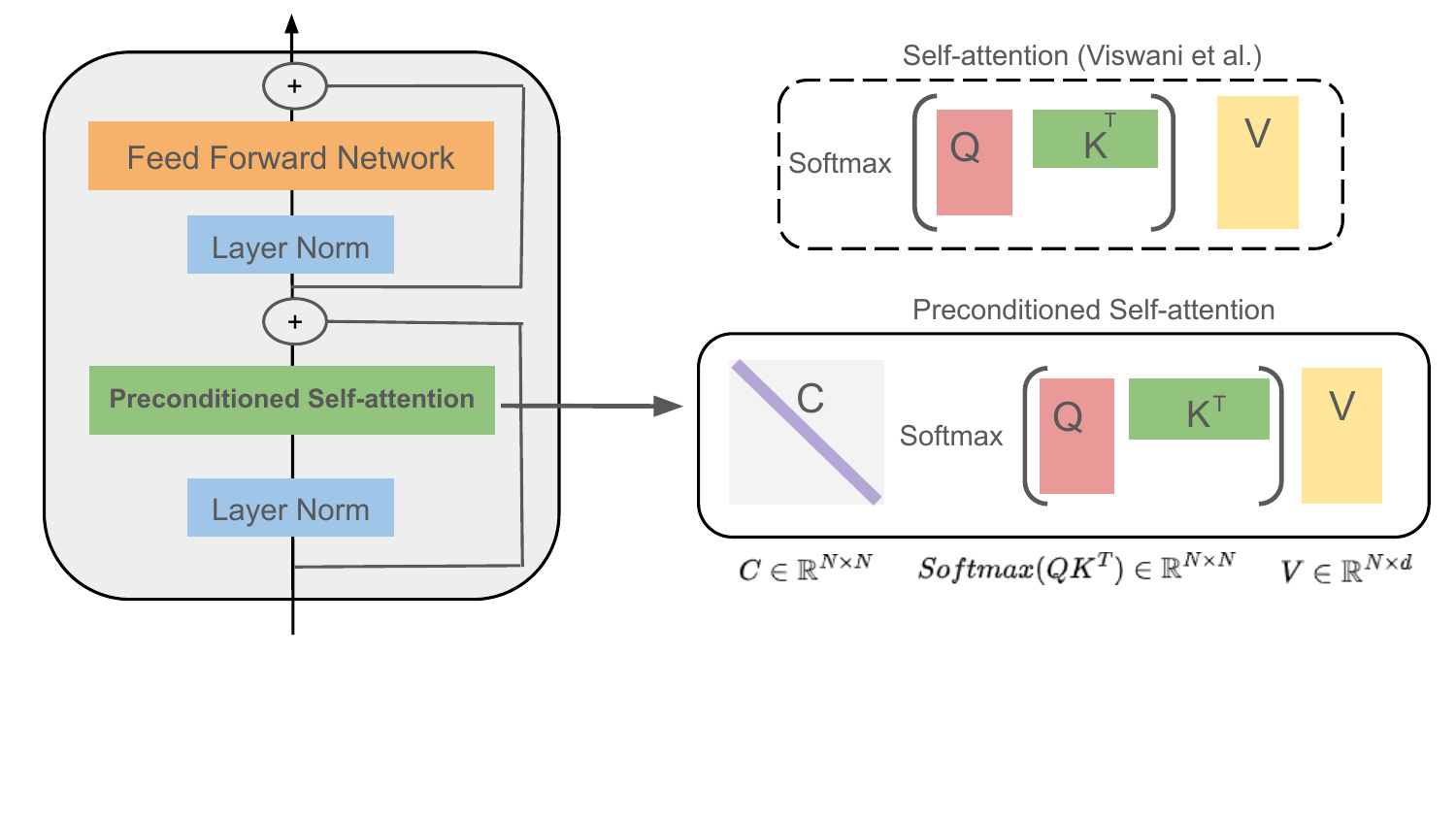}
    %\caption{Trained for 350s}
    %\end{subfigure}
    \vspace{-8em}
    \caption{Schematic representation of preconditioned self-attention. Left: A layer of a general transformer employing a preconditioned self-attention block. Right: Self-attention (top) and preconditioned self-attention (bottom) are compared. The key difference is that preconditioned self-attention applies a multiplication by a diagonal preconditioner matrix C, which depends on the query $Q$, key $K$, and value $V$. }
    \label{fig:front_fig}
\end{figure*}

Transformers have revolutionized machine learning, emerging as the dominant architecture for a diverse range of applications, including natural language processing, computer vision, and robotics. Introduced by \cite{vaswani2017attention}, transformers have redefined how models process and interpret data by capturing both local and global dependencies. Central to this innovation is the self-attention mechanism, a key component that computes pairwise interactions among all input tokens simultaneously. By dynamically weighing the relative importance of tokens, self-attention enables transformers to model intricate patterns and complex relationships within data, establishing them as a foundational architecture for modern machine learning applications.

In this paper, we focus on the conditioning of the self-attention matrix. The conditioning of a matrix is measured by its condition number, defined as the ratio of its largest to smallest singular value. It is well-established that poorly conditioned matrices (i.e. those with high condition numbers) can adversely impact the convergence of gradient-based optimization algorithms, making training challenging \citep{nocedal1999numerical}. While prior studies have examined the conditioning of feedforward layers in neural networks \citep{agarwal2021deep, liu2022loss, saratchandran2025weight}, the conditioning of self-attention mechanisms in transformers remains largely unexplored.

We introduce a theoretical framework to analyze the condition number of the self-attention matrix and show that it can be bounded by a quantity dependent on the Frobenius norm of the values matrix comprising the self-attention mechanism. Leveraging this insight, we propose a method to reduce the condition number of the self-attention matrix. Drawing inspiration from numerical methods, we employ a preconditioner—an auxiliary matrix designed to lower the conditioning of the target matrix. However, constructing an effective preconditioner is often challenging, as it requires tailoring to the specific structure of the problem. Leveraging the unique structure of the self-attention matrix, we propose a novel preconditioner that significantly reduces its condition number. This approach, which we term preconditioned attention, improves the conditioning of the self-attention mechanism and lays the groundwork for more stable and efficient training of transformers. \cref{fig:front_fig} presents a schematic representation of a preconditioned attention layer. In this design, the self-attention matrix is multiplied on the left by a square diagonal matrix $C$, which serves to condition the self-attention matrix, effectively reducing its condition number. The matrix $C$ is dynamically computed based on the queries $Q$, keys $K$, and values $V$ of the self-attention matrix.

We demonstrate the effectiveness of preconditioned self-attention across a diverse range of applications, including image classification, object detection, instance segmentation, text classification, and physics-based modeling. In each case, we show that preconditioned self-attention serves as a robust and efficient replacement for various attention mechanisms commonly used in the literature, consistently achieving superior results across all these tasks. One of the key strengths of preconditioned self-attention lies in its versatility. It is agnostic to the underlying attention mechanism, making it compatible with a wide range of architectures. As we demonstrate, it can serve as a drop in replacement to advanced attention mechanisms proposed in recent works \citep{ali2021xcit, liu2021swin, touvron2021training, yuan2022volo, ding2022davit, xiong2021nystromformer}, offering a straightforward yet powerful enhancement to existing models. Our main contributions are:
\begin{itemize}
    \item[1.] A theoretical framework demonstrating that self-attention mechanisms are inherently ill-conditioned, along with the introduction of preconditioned self-attention—a novel approach that employs a preconditioner to modify the self-attention matrix and significantly reduce its condition number.
    \item[2.] An empirical evaluation of preconditioned attention across various transformer architectures and diverse applications from the literature, demonstrating the effectiveness of our proposed methodology.
\end{itemize}

\section{Related Work}\label{sec:rel_work}

\paragraph{Efficient Attention.} In recent years, numerous methods have been proposed to improve the efficiency and effectiveness of transformers, particularly by addressing their computational complexity and rethinking attention mechanisms. The Data-efficient Image Transformer (DeiT) \citep{touvron2021training} focuses on reducing data requirements by utilizing distillation tokens, which significantly enhance the training efficiency and performance of vision transformers without relying on large-scale datasets. Cross-Covariance Image Transformer (XCiT) \citep{ali2021xcit} introduces a novel approach to attention, operating on the cross-covariances of spatial features, thereby enabling efficient interactions across spatial dimensions and reducing computational costs. Vision Outlooker (VOLO) \citep{yuan2022volo} refines the attention mechanism by incorporating an outlook attention module, designed to capture long-range dependencies while maintaining computational efficiency, leading to superior performance in vision tasks. Meanwhile, Nyströmformer \citep{xiong2021nystromformer} employs a Nyström-based approximation for the standard self-attention mechanism, reducing its quadratic complexity to near-linear time while retaining the core properties of attention. This transformer has shown great results in modeling long range sequence modeling. We will show that our framework of conditioning self-attention can be added as a drop in replacement to many of these newer forms of attention yielding superior results.

\paragraph{Conditioning.}
A long line of research has examined neural network training through the perspective of conditioning. Early analyses framed the problem in terms of the neural tangent kernel (NTK), showing that its spectral properties largely determine optimization behavior in the infinite-width limit. In particular, well-conditioned NTKs were linked to more stable convergence~\citep{jacot2018neural, liu2022loss}. Beyond kernel analyses, architectural choices have also been identified as critical for conditioning \cite{saratchandran2024rethinking, zheng2025structured, macdonald2023skip, chng2025preconditioners, saratchandran2025leaner, saratchandran2025enhancing, saratchandran2026spectral, qin2022cosformer}. Depth, for instance, has been shown to enhance the conditioning of gradient updates, facilitating more effective optimization~\citep{agarwal2021deep}. In the context of Transformers, skip connections have been highlighted as an intrinsic mechanism that regularizes conditioning and prevents instabilities in very deep networks~\citep{ji2025always, ji2025cutting}. Furthermore, works in fine-tuning have shown how to better the conditioning of low-rank adapters for better training \cite{ji2024sine, albert2025randlora, albert2025towards}.
%In contrast to these perspectives, we provide a methodology that directly conditions the attention layer in a transformer leading to better convergence. 

%on whether conditioning can be embedded directly at initialization, aiming to design attention layers that are well-behaved from the very start of training.

\section{Notation}\label{sec:prelims}

In this section, we formally define the transformer architecture by describing the structure of a transformer layer, along with notation for various mathematical elements that will be referenced in subsequent sections. For further foundational details on transformers, readers may consult \citet{vaswani2017attention}.

A layer in a transformer can be represented as a mapping 
\begin{equation}
  \mathbf{T}: \mathbb{R}^{N \times D} \rightarrow \mathbb{R}^{N \times D}  
\end{equation}
defined by 
\begin{equation}\label{eqn:trans_main}
    \mathbf{T}(X) = \mathbf{F}(\mathbf{A}(X) + X),
\end{equation}
where \( \mathbf{F} \) denotes a feed forward network with a residual connection, and \( \mathbf{A} \) represents an attention mechanism.

Self-attention is composed of three learnable matrices—query (\( Q \)), key (\( K \)), and value (\( V \))—defined for an input sequence \( X \in \mathbb{R}^{N \times D} \) as follows: \( q = XQ \), \( k = XK \), and \( v = XV \), where \( Q, K \in \mathbb{R}^{D \times d} \) and \( V \in \mathbb{R}^{D \times d} \). The output of the attention head \( \mathbf{A}(X) \) is then given by
\begin{equation}\label{eqn:attn_eqn_general}
    \mathbf{A}(X) = \mathbf{softmax}(qk^T)v,
\end{equation}
where $\mathbf{softmax}$ is the softmax activation that acts row-wise on the matrix $qk^T$. Note that then $A(X) \in \R^{N\times d}$.

In general, multiple attention heads \( \mathbf{A}_i \) for \( 1 \leq i \leq h \) are utilized, each of dimension $N \times d_i 
= N \times \frac{d}{h}$.
These are then concatenated to produce a multi-head attention output,
\begin{equation}\label{eqn:multi_head_eqn}
    \mathbf{A} = [\mathbf{A}_1, \cdots, \mathbf{A}_h],
\end{equation}
which is subsequently fed into the feedforward layer. The full transformer architecture is obtained by sequentially stacking several such transformer layers, as defined in \cref{eqn:trans_main}.

Given an arbitrary matrix $M$ we will denote the Frobenius norm of $M$ by $\vert\vert M\vert\vert_F$. 
If $M$ is of size $n \times d$ and $k = \min\{n, d\}$ then we will denote the singular values of $M$ by $\sigma_1,\ldots, \sigma_k$ and order them so that 
$\sigma_1 \geq \cdots \geq \sigma_k$. We remind the reader that the following formula exists 
$\vert\vert M\vert\vert_F = \sqrt{\sigma_1^2+\cdots + \sigma_k^2}$. We will use the standard terminology SVD to denote the singular value decomposition of a matrix. In our main theorem we will need to consider the rows of a matrix $M$. We will denote the i-th row of $M$ by $M_{i,}$, where if $M$ has size $n \times d$ then each
$M_{i,}$ is a vector of shape $1 \times d$ for each $1 \leq i \leq n$. The 2-norm of this vector will be denoted by 
$||M_{i,}||_2$.

\section{Theoretical Framework}

\subsection{Conditioning of self-attention}

In this section we study the conditioning of the self-attention matrix of a transformer. 

\begin{definition}
Let $A$ be an $n \times d$ matrix of full rank. The condition number of $A$, denoted by $\kappa$, is defined as
\begin{equation}
    \kappa(A) = \frac{\sigma_1}{\sigma_k}
\end{equation}
where $k = \min\{n, d\}$, and $\sigma_1$ denotes the largest singular value of $A$ and $\sigma_k$ the smallest singular value of $A$. Note that as $A$ is assumed full rank $\sigma_k \neq 0$ and the condition number is well-defined. Furthermore, observe that $\kappa(A) \geq 1$ since 
$\sigma_1 \geq \sigma_k$. 
\end{definition}

The condition number plays a crucial role in iterative algorithms, such as gradient descent, as matrices with lower condition numbers within a non-linear system lead to better convergence of these algorithms \cite{nocedal1999numerical, agarwal2021deep, liu2022loss, saratchandran2025weight}.

In general, for large matrices computing the singular value decomposition of the matrix each time to compute the condition number is extremely difficult. The following estimate of 
Guggenheimer et al. \cite{guggenheimer1995simple} gives a useful bound on the condition number of a full rank matrix which can serve as a good measure of how complex the condition number can be.

\begin{theorem}[\cite{guggenheimer1995simple}]\label{thm:guggen}
 Let $A$ be an $n \times d$ matrix of full rank. Let 
 $k = \min\{n ,d\}$.
 Then the condition number of $A$ has the following bound
\begin{equation}\label{eqn:guggen}
    \kappa(A) \leq \frac{2}{\sigma_1\cdots\sigma_k}\bigg{(}
    \frac{\vert\vert A\vert\vert_F}{\sqrt{k}}
    \bigg{)}^k.
\end{equation}
\end{theorem}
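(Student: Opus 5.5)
The plan is to deduce \cref{thm:guggen} from the AM--GM inequality applied to the squared singular values, after separating out $\sigma_1$ and $\sigma_k$. Since $A$ has full rank, all $\sigma_i>0$. Multiplying \cref{eqn:guggen} through by $\sigma_1\cdots\sigma_k$, the bound $\kappa(A)\le \frac{2}{\sigma_1\cdots\sigma_k}\bigl(\|A\|_F/\sqrt{k}\bigr)^k$ is equivalent to
\begin{equation*}
\frac{\sigma_1^2}{\sigma_k}\,\sigma_2\cdots\sigma_{k-1}\;\le\; 2\left(\frac{\sigma_1^2+\cdots+\sigma_k^2}{k}\right)^{k/2},
\end{equation*}
using $\|A\|_F^2=\sum_i\sigma_i^2$. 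The case $k=1$ is trivial, since then $\kappa=1$ and the right side of \cref{eqn:guggen} equals $2$. So assume $k\ge 2$.

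The key step is to group $\sigma_1$ and $\sigma_k$ together and apply AM--GM to the $k$ numbers
\[
\tfrac{\sigma_1^2+\sigma_k^2}{2},\ \tfrac{\sigma_1^2+\sigma_k^2}{2},\ \sigma_2^2,\ \ldots,\ \sigma_{k-1}^2 .
\]
Their sum is exactly $\|A\|_F^2$, so AM--GM gives
\[
\Bigl(\tfrac{\sigma_1^2+\sigma_k^2}{2}\Bigr)^2\sigma_2^2\cdots\sigma_{k-1}^2\le \Bigl(\tfrac{\|A\|_F^2}{k}\Bigr)^k .
\]
Taking square roots yields
\[
\tfrac{\sigma_1^2+\sigma_k^2}{2}\,\sigma_2\cdots\sigma_{k-1}\le \bigl(\|A\|_F/\sqrt{k}\bigr)^k .
\]

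It then remains to compare the left side with the target quantity. Dividing the target inequality by $\sigma_2\cdots\sigma_{k-1}$, it suffices to show
\[
\frac{\sigma_1^2}{\sigma_k}\le 2\cdot\frac{\sigma_1^2+\sigma_k^2}{2}\cdot\frac{1}{\sigma_1\sigma_k}\cdot\sigma_1\cdot\frac{1}{\,\cdot\,}\quad\text{(to be organised correctly).}
\]
Said plainly: we want $\kappa\cdot\sigma_1\cdots\sigma_k=\frac{\sigma_1^2}{\sigma_k}\cdot\sigma_2\cdots\sigma_{k-1}\cdot\sigma_k/\sigma_k$. More carefully,
\[
\kappa\,\sigma_1\cdots\sigma_k=\sigma_1^2\,\sigma_2\cdots\sigma_{k-1},
\]
so we need
\[
\sigma_1^2\,\sigma_2\cdots\sigma_{k-1}\le 2\cdot \tfrac{\sigma_1^2+\sigma_k^2}{2}\,\sigma_2\cdots\sigma_{k-1}.
\]
This is just $\sigma_1^2\le\sigma_1^2+\sigma_k^2$, which holds trivially. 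Chaining it with the AM--GM bound gives \cref{eqn:guggen}.

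The main care point is the bookkeeping: recognising that $\kappa\,\sigma_1\cdots\sigma_k=\sigma_1^2\sigma_2\cdots\sigma_{k-1}$, so that the factor $2$ absorbs the averaging of $\sigma_1^2$ with $\sigma_k^2$. The boundary case $k=2$ also needs a check. There the middle product is empty, and the argument reduces to $\sigma_1^2\le \sigma_1^2+\sigma_k^2=\|A\|_F^2=2(\|A\|_F/\sqrt2)^2$, which holds.
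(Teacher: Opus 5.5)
The paper gives no proof of this statement (it is quoted from Guggenheimer, Edelman and Johnson). Your argument is correct and is essentially the standard proof from that reference: apply AM--GM to $\tfrac{\sigma_1^2+\sigma_k^2}{2}$ taken twice together with $\sigma_2^2,\dots,\sigma_{k-1}^2$, then use $\sigma_1^2\le\sigma_1^2+\sigma_k^2$. It works verbatim for rectangular full-rank $A$, since it only uses $\|A\|_F^2=\sum_i\sigma_i^2$.

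In a write-up, delete your first displayed reformulation, which carries a spurious factor $1/\sigma_k$. The correct identity for $k\ge 2$, which you do use later, is $\kappa(A)\,\sigma_1\cdots\sigma_k=\sigma_1^2\sigma_2\cdots\sigma_{k-1}$. Also delete the garbled ``to be organised correctly'' line.
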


Our first main insight is that the above theorem gives a bound on the condition number of self-attention that depends on the values matrix $v$ of the self-attention $\mathbf{softmax}(qk^t)v$. 

\begin{theorem}\label{thm:condition_self_attn}
    Let $A = \mathbf{softmax}(qk^T)v$ be the standard self-attention of size $n \times d$ and assume $A$ is of full rank. Let $\sigma_1 \geq \cdots \geq \sigma_k$ denote the singular values of $A$, where $k = \min\{n, d\}$. Then
\begin{equation}
\kappa(A) \leq  \frac{2}{\sigma_1\cdots\sigma_k}\bigg{(}
\frac{\sqrt{n}}{\sqrt{k}}\vert\vert v\vert\vert_F\bigg{)}^k
\end{equation}
\end{theorem}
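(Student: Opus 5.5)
We plan to obtain the statement directly from \cref{thm:guggen}. The only new ingredient is an upper bound on $\vert\vert A\vert\vert_F$ in terms of $\vert\vert v\vert\vert_F$. Write $S = \mathbf{softmax}(qk^T)$, an $n\times n$ matrix. Each row $S_{i,}$ has nonnegative entries summing to $1$, so $S$ is row-stochastic. Since $A = Sv$ is assumed full rank, \cref{thm:guggen} applies and gives
\begin{equation*}
\kappa(A) \leq \frac{2}{\sigma_1\cdots\sigma_k}\bigg(\frac{\vert\vert Sv\vert\vert_F}{\sqrt{k}}\bigg)^k .
\end{equation*}
The right-hand side is increasing in $\vert\vert Sv\vert\vert_F$, and the singular values in the prefactor are those of $A$ itself. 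It therefore suffices to prove $\vert\vert Sv\vert\vert_F \leq \sqrt{n}\,\vert\vert v\vert\vert_F$.

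For this key inequality we use submultiplicativity, $\vert\vert Sv\vert\vert_F \leq \vert\vert S\vert\vert_2 \vert\vert v\vert\vert_F$. This reduces the task to bounding the operator norm $\vert\vert S\vert\vert_2$.

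\begin{itemize}
\item The crudest route bounds this by $\vert\vert S\vert\vert_F$. Since each row $S_{i,}$ lies in the probability simplex, $\vert\vert S_{i,}\vert\vert_2 \leq \vert\vert S_{i,}\vert\vert_1 = 1$, so $\vert\vert S\vert\vert_F^2 = \sum_i \vert\vert S_{i,}\vert\vert_2^2 \leq n$. This gives $\vert\vert S\vert\vert_F \leq \sqrt{n}$, which is exactly the constant in the statement.
\item Alternatively, $\vert\vert S\vert\vert_2 \leq \sqrt{\vert\vert S\vert\vert_1 \vert\vert S\vert\vert_\infty}$. Here $\vert\vert S\vert\vert_\infty = 1$ (maximum row sum) and $\vert\vert S\vert\vert_1 \leq n$ (maximum column sum, each entry at most $1$). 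This again gives $\sqrt{n}$.
\end{itemize}

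To stay with the paper's row notation, we can instead argue row by row. The $i$-th row of $A$ is $A_{i,} = \sum_j S_{ij} v_{j,}$, a convex combination of the rows of $v$. By convexity of $\vert\vert\cdot\vert\vert_2^2$ (Jensen),
\begin{equation*}
\vert\vert A_{i,}\vert\vert_2^2 \leq \sum_j S_{ij}\vert\vert v_{j,}\vert\vert_2^2 \leq \vert\vert v\vert\vert_F^2 .
\end{equation*}
Summing over $i$ gives $\vert\vert A\vert\vert_F^2 \leq n\vert\vert v\vert\vert_F^2$.

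Substituting this bound into the \cref{thm:guggen} estimate yields the claim. There is no real obstacle: the only point needing care is that the monotonicity step uses the same $\sigma_i$ (those of $A$) on both sides, so replacing $\vert\vert A\vert\vert_F$ by the larger quantity $\sqrt{n}\,\vert\vert v\vert\vert_F$ only increases the bound. The row-stochastic structure of the softmax is the single fact doing the work.
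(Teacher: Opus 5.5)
Your proposal is correct and is essentially the paper's proof: apply \cref{thm:guggen}, then bound $\vert\vert \mathbf{softmax}(qk^T)v\vert\vert_F \leq \vert\vert \mathbf{softmax}(qk^T)\vert\vert_F\,\vert\vert v\vert\vert_F \leq \sqrt{n}\,\vert\vert v\vert\vert_F$, where the $\sqrt{n}$ comes from each softmax row having squared $2$-norm at most its squared $1$-norm, which equals $1$ (this is \cref{lem:softmax_frob}). Your detour through $\vert\vert S\vert\vert_2$ and your row-wise Jensen argument are valid alternative derivations of the same key inequality, but they do not change the overall route.
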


The proof of \cref{thm:condition_self_attn} is given in \cref{app:theory}. We observe that if $\vert\vert v\vert\vert_f > 1$ then since $n \geq k$ the term $\bigg{(}
\frac{\sqrt{n}}{\sqrt{k}}\vert\vert v\vert\vert_F\bigg{)}^k$ will grow exponentially in $k$. Thus for large attention matrices of full rank this term can be extremely large leading to a high condition number.

\subsection{A preconditioner for self-attention}\label{subsec:precond}

In \cref{thm:condition_self_attn} we saw that the self-attention matrix in a transformer layer can be bounded by a quantity that depended exponentially on the Frobenius norm of the values matrix. In this section, we would like to design a matrix $C$ so that when we consider the product
\begin{equation}
    C\cdot \mathbf{softmax}(qk^T)v
\end{equation}
it satisfies
\begin{equation}
    \kappa(C\cdot \mathbf{softmax}(qk^T)v) \leq 
    \kappa( \mathbf{softmax}(qk^T)v).
\end{equation}
In other words, we want to understand whether there is a matrix $C$ that we can use to multiply the self-attention matrix
$\mathbf{softmax}(qk^T)v$ with so as to reduce the condition number. Such a $C$ is called a preconditioner.

In general, computing the condition number of a matrix directly via the SVD is a costly process. However, \cref{thm:guggen} provides a useful upper bound that can serve as a complexity measure of the condition number of a matrix without needing to directly access the SVD. We define this complexity measure of a matrix $A$ with rank $k$ by
\begin{equation}\label{eqn:condition_complxity_quant}
\mu(A) :=  \frac{2}{\sigma_1\cdots\sigma_k}\bigg{(}
    \frac{\vert\vert A\vert\vert_F}{\sqrt{k}}
    \bigg{)}^k.  
\end{equation}
Our goal will be to reduce the quantity $\mu(A)$. The following theorem produces a matrix $C$ that when multiplied with a self-attention matrix brings down the quantity $\mu(A)$.

\begin{theorem}\label{thm:precond_attn}
Let $A = \mathbf{softmax}(qk^T)v$ denote a self-attention matrix, of size $n \times d$, within a transformer layer and assume $A$ has full rank. Let 
$C$ be defined as the $n \times n$ diagonal matrix whose i-th diagonal entry is given by $\frac{1}{||A_{i,}||_2}$. Then
\begin{equation}
    \mu(C\cdot A) \leq \mu(A).
\end{equation}
\end{theorem}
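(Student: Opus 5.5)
The plan is to exploit two facts. First, $\mu$ is invariant under scaling. Second, $\sigma_1\cdots\sigma_k$ behaves multiplicatively under left multiplication by a diagonal matrix, at least when $k=n$. Write $r_i := \|A_{i,}\|_2$. These are positive because $A$ has full rank and, in the case $k=n$ treated first, every row is nonzero. Then $C=\mathrm{diag}(1/r_1,\ldots,1/r_n)$, and every row of $CA$ is a unit vector, so
\begin{equation*}
\|CA\|_F^2=\sum_{i=1}^n \frac{\|A_{i,}\|_2^2}{r_i^2}=n,\qquad \|A\|_F^2=\sum_{i=1}^n r_i^2 .
\end{equation*}
The numerator of $\mu(CA)$ is therefore completely explicit, and the whole proof reduces to comparing the products of singular values of $CA$ and $A$.

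\textbf{Case $k=n\le d$.} Here $\sigma_1\cdots\sigma_n=\sqrt{\det(AA^T)}$, and $CA$ is again of full rank $n$. Since $C$ is square, $\det(CAA^TC)=\det(C)^2\det(AA^T)$, which gives
\begin{equation*}
\prod_i\sigma_i(CA)=\frac{\prod_i\sigma_i(A)}{\prod_i r_i}.
\end{equation*}
Substituting into the definition of $\mu$, with $\|CA\|_F/\sqrt{n}=1$, yields
\begin{equation*}
\mu(CA)=\frac{2\prod_i r_i}{\prod_i\sigma_i(A)}.
\end{equation*}
By comparison,
\begin{equation*}
\mu(A)=\frac{2}{\prod_i\sigma_i(A)}\Big(\tfrac{1}{n}\textstyle\sum_i r_i^2\Big)^{n/2}.
\end{equation*}
The desired inequality is therefore exactly the AM--GM inequality $\prod_i r_i^2\le\big(\frac1n\sum_i r_i^2\big)^n$. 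Equality holds precisely when all rows of $A$ have equal norm.

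\textbf{Case $k=d<n$ (the main obstacle).} Here $\prod_i\sigma_i=\sqrt{\det(A^TA)}$ and $\det(A^TC^2A)$ does not factor, so I would try the Cauchy--Binet formula:
\begin{equation*}
\det(A^TC^2A)=\sum_{|S|=d}\Big(\prod_{i\in S}r_i^{-2}\Big)\det(A_S)^2 ,
\end{equation*}
where $A_S$ is the $d\times d$ submatrix of rows indexed by $S$. The aim would be to compare this sum with $\det(A^TA)=\sum_S\det(A_S)^2$ using AM--GM on each term.

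However, I expect this step to fail in general. Take $n=3$, $d=2$ with rows $(1,0),(0,1),(t,0)$:
\begin{itemize}
\item As $t\to0$, $\mu(A)\to 2$.
\item After normalization the rows of $CA$ are $(1,0),(0,1),(1,0)$, so $\mu(CA)=2\cdot\frac{3}{2}\cdot\frac{1}{\sqrt 2}=\frac{3}{\sqrt2}>2$.
\end{itemize}
So the theorem as stated holds for $n\le d$ by the argument above. In the regime $n>d$, I would need to add a hypothesis, for instance restricting to $n\le d$, or else reinterpret $\mu$ with $k=n$ for the row-normalized matrix. I would check which of these the author intends before attempting that case.
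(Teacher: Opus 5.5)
Your analysis is correct, and it is sharper than the paper's. For $n\le d$, the identity $\prod_i\sigma_i(CA)=\det(C)\prod_i\sigma_i(A)=\prod_i\sigma_i(A)/\prod_i r_i$ together with $\|CA\|_F=\sqrt n$ reduces the claim exactly to AM--GM on the $r_i^2$. That is a complete proof in this regime.

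For $n>d$, your example works for every $0<t<1$, not only in the limit. Indeed $\mu(A)=(2+t^2)/\sqrt{1+t^2}$ is increasing in $t^2$, so $\mu(A)<3/\sqrt2=\mu(CA)$. To make the refutation airtight, add that such an $A$ really has the form $\mathbf{softmax}(qk^T)v$, as the hypothesis requires. In fact every $n\times d$ matrix does. Take $qk^T=s\,xy^T$ with distinct $x_i$, distinct $y_j$ and $s\neq 0$. Then $\mathbf{softmax}(qk^T)$ is the generalized Vandermonde matrix $\big(e^{s x_i y_j}\big)$ with its rows rescaled, hence invertible, and $v=\mathbf{softmax}(qk^T)^{-1}A$ works.

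The paper takes a different route, and that route does not work. It uses $\mu(\lambda A)=\mu(A)$ to write $\|A\|_F=a\sqrt n$. From $\sigma_i(CA)\ge\sigma_{\min}(C)\sigma_i(A)$ it obtains
\[
\mu(A)/\mu(CA)=a^k\prod_i\sigma_i(CA)\Big/\prod_i\sigma_i(A)\ge(a\,\sigma_{\min}(C))^k ,
\]
and then ``chooses'' $a\ge 1/\sigma_{\min}(C)$. But $C$ depends on $A$: replacing $A$ by $\lambda A$ replaces $C$ by $\lambda^{-1}C$. Hence the quantity
\[
a\,\sigma_{\min}(C)=\frac{\sqrt{\tfrac1n\sum_i r_i^2}}{\max_i r_i}\le 1
\]
is scale-invariant and cannot be made large. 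It equals $1$ only when all row norms coincide, and then $C$ is a multiple of the identity and the claim is trivial.

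So the paper's chain certifies nothing, even when $n\le d$. That case is rescued by exactly what you keep and the paper discards: the full factor $\det(C)=\prod_i r_i^{-1}$, rather than the lossy $\sigma_{\min}(C)^k=(\max_i r_i)^{-k}$. Your proposal therefore proves the correct version of the result, namely the statement with the extra hypothesis $n\le d$. It also correctly shows that no proof of the statement as written can exist. Note that $n>d$, where sequence length exceeds head dimension, is the usual regime in the paper's experiments.
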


The proof of \cref{thm:precond_attn} is given in \cref{app:theory}. \cref{thm:precond_attn} motivates the consideration of a new type of attention mechanism defined by the equation
\begin{equation}
    C\cdot \mathbf{softmax}(qk^T)v
\end{equation}
where $C$ is the matrix given by \cref{thm:precond_attn}. For multi-head attention we condition each head thereby forming
\begin{equation}
 [C_1\cdot \mathbf{softmax}(q_1k_1^T)v_1,\ldots, 
 C_h\cdot \mathbf{softmax}(q_hk_h^T)v_h].   
\end{equation}
We call such an attention mechanism preconditioned self-attention.
Note that although the theory presented in this section focuses on an upper bound of self-attention our main insight, as show in \cref{sec:experiments}, is that this still serves as a good way to motivate the need to condition self-attention. Furthermore, the theoretical framework focused on self-attention, however 
in \cref{sec:experiments} we will show that the preconditioner matrix obtained in \cref{thm:precond_attn} works well with other forms of attention. In \cref{sec:experiments} we plot condition numbers of transformers.

\begin{algorithm}[t]
\caption{Forward Pass: Preconditioned Attention}
\label{alg:precond_attn}
\KwIn{Queries $q$, Keys $k$, Values $v$}
\KwOut{Output representation $y$}
\begin{enumerate}
    \item Compute attention weights: $A \gets \mathbf{softmax}(qk^\top)$.
    \item Construct the preconditioner $C$ as defined in \cref{thm:precond_attn}.
    \item Apply the preconditioner: $\tilde{A} \gets C \cdot A$.
    \item Compute the output: $y \gets \tilde{A} \cdot v$.
\end{enumerate}
\end{algorithm}

\Cref{alg:precond_attn} illustrates how preconditioned attention is incorporated into the forward pass of a transformer. Importantly, the preconditioner \(C\) is treated as a fixed transformation: it is not differentiable, carries no gradients, and therefore plays no role in the backward pass of optimization.

\subsection{Computational cost}\label{subsec:cost}

In the previous section we introduced preconditioned self-attention which takes the form
\begin{equation}
C\cdot \mathbf{softmax}(qk^T)v    
\end{equation}
where $C$ is a diagonal matrix, known as a preconditioner, that is built from the row norms of the self-attention matrix
$\mathbf{softmax}(qk^T)v$. 

Observe that since $C$ depends on $q$, $k$ and $v$, and these matrices are learnable, when training a transformer using preconditioned self-attention we have to recompute $C$ during each forward pass. This will necessarily add some computational overhead to the forward pass (see \cref{alg:precond_attn}). In this section, we will compute the extra floating point operations (FLOPs) for the forward pass.

Let $A = \mathbf{softmax}(qk^T)v$ denote one head of the self-attention block in one layer of a transformer and let us assume that $A$ has size $n \times \frac{d}{h}$, where $h$ is the total number of heads. The preconditioner matrix $C$ given by \cref{thm:precond_attn} is computed as a diagonal matrix where the
i-th diagonal element $C_{ii}$ of $C$ is given by the inverse of the 2-norm of the i-th row of $A$ namely.

To compute the number of FLOPs involved in computing $C$ through each forward pass we break down the computation.

\paragraph{Step 1: Compute the 2-norm of each row of $A$.} 
For each row $A_{i,}$ of $A$, where $i = 1, 2, \ldots, n$, we have:
\begin{enumerate}
    \item \textbf{Square each element of the row:} This requires $\frac{d}{h}$ multiplications.
    \item \textbf{Sum the squared elements:} This requires $\frac{d}{h}-1$ additions.
    \item \textbf{Take the square root of the sum:} This requires 1 operation.
\end{enumerate}
Thus, for a single row, the total operations are:
\begin{equation}\label{eqn:single_row_op}
\frac{d}{h} + \left(\frac{d}{h} - 1\right) + 1 = 2\frac{d}{h}.
\end{equation}
For $n$ rows, the total operations to compute the 2-norms are:
\begin{equation}\label{eqn:n_row_op}
 n \cdot 2\frac{d}{h} = 2n\frac{d}{h}.   
\end{equation}

%\begin{table*}[!ht]
% \caption{Comparison of vision transformers with their original attention mechanism verse one with a preconditioned attention mechanism pre-trained on the ImageNet-1k dataset. We report the classification top-1\% accuracy.}
%    \centering
%    \begin{tabular}{c|c|c|c|c|c}

%        \midrule
%        \multirow{2}{*}{} & \multicolumn{4}{c}{Models} \\
%        & ViT-base & DeiT-base & Swin-base & XciT-medium & DaViT-base\\
%        \midrule
%        Original & 80.2 & 81.6 & 83.4 & 82.6 & 84.2 \\
%        \midrule
%       Preconditioned & \textbf{81.4} & \textbf{82.5} & \textbf{84.5} & \textbf{83.4} & \textbf{84.9} \\
%        \midrule
%     \end{tabular}
%    \label{tab:vits}
%\end{table*}

\begin{table*}[!ht]
 \caption{Comparison of vision transformers with their original attention mechanism versus one with a preconditioned attention mechanism pretrained on the ImageNet-1k dataset. We report the classification top-1\% accuracy.}
    \centering
    \begin{tabular}{c|c|c|c|c}
        \midrule
        \multirow{2}{*}{} & \multicolumn{4}{c}{Models} \\
        & ViT-base & XciT-medium & DeiT-base & Swin-base \\
        \midrule
        Original & 80.2 & 82.6 & 81.6 & 83.4 \\
        \midrule
        Preconditioned & \textbf{81.4} & \textbf{83.5} & \textbf{82.7} & \textbf{84.5} \\
        \midrule
     \end{tabular}
    \label{tab:vits}
\end{table*}

\paragraph{Step 2: Compute the inverse of each 2-norm.}
For each of the $n$ rows, we then need to compute the reciprocal of the 2-norm:
\begin{equation}\label{eqn:inverse_row_norm}
C_{ii} = \frac{1}{||A_{i,}||_2}.    
\end{equation}
This requires $1$ division per row, for a total of:
\begin{equation}
n \text{ (divisions)}.    
\end{equation}

\paragraph{Total operations:} Summing up the operations from step 1 and step 2 we get that the total number of FLOPs is
\begin{equation}\label{eqn:flops_for_C}
  \text{ Total FLOPs for computing C } = n(2\frac{d}{h} + 1).   
\end{equation}
As there are $h$ heads per layer we then get that the total number of extra FLOPs for preconditioned attention per layer is
\begin{equation}\label{eqn:total_flops}
\text{ Total extra FLOPs per layer } = n(2d + h)     
\end{equation}
which is obtained by summing \cref{eqn:flops_for_C} $h$ times.

\section{Experiments}\label{sec:experiments}

\subsection{Image Classification}\label{sec:IC}

In this section, we apply our theoretical insights from \cref{subsec:precond} to vision transformers on the ImageNet-1k dataset, a benchmark with over 1.2 million labeled images across 1,000 categories. We evaluate four representative architectures: ViT~\citep{dosovitskiy2020image}, DeiT~\citep{touvron2021training}, Swin~\citep{liu2021swin}, and XCiT~\citep{ali2021xcit}. ViT-Base (ViT-B) and DeiT-Base (DeiT-B) follow the standard transformer design, with DeiT using a data-efficient training strategy. Swin-Base (Swin-B) introduces hierarchical representations with shifted window attention, while XCiT-M24 (XCiT-M) employs Local Patch Interaction and Cross-Covariance Attention. In all cases, we compare the baseline model to a variant with preconditioned attention, where each attention matrix is multiplied by the preconditioner defined in \cref{thm:precond_attn}.

\begin{figure*}[ht!]
    \centering
    \includegraphics[width=0.48\linewidth]{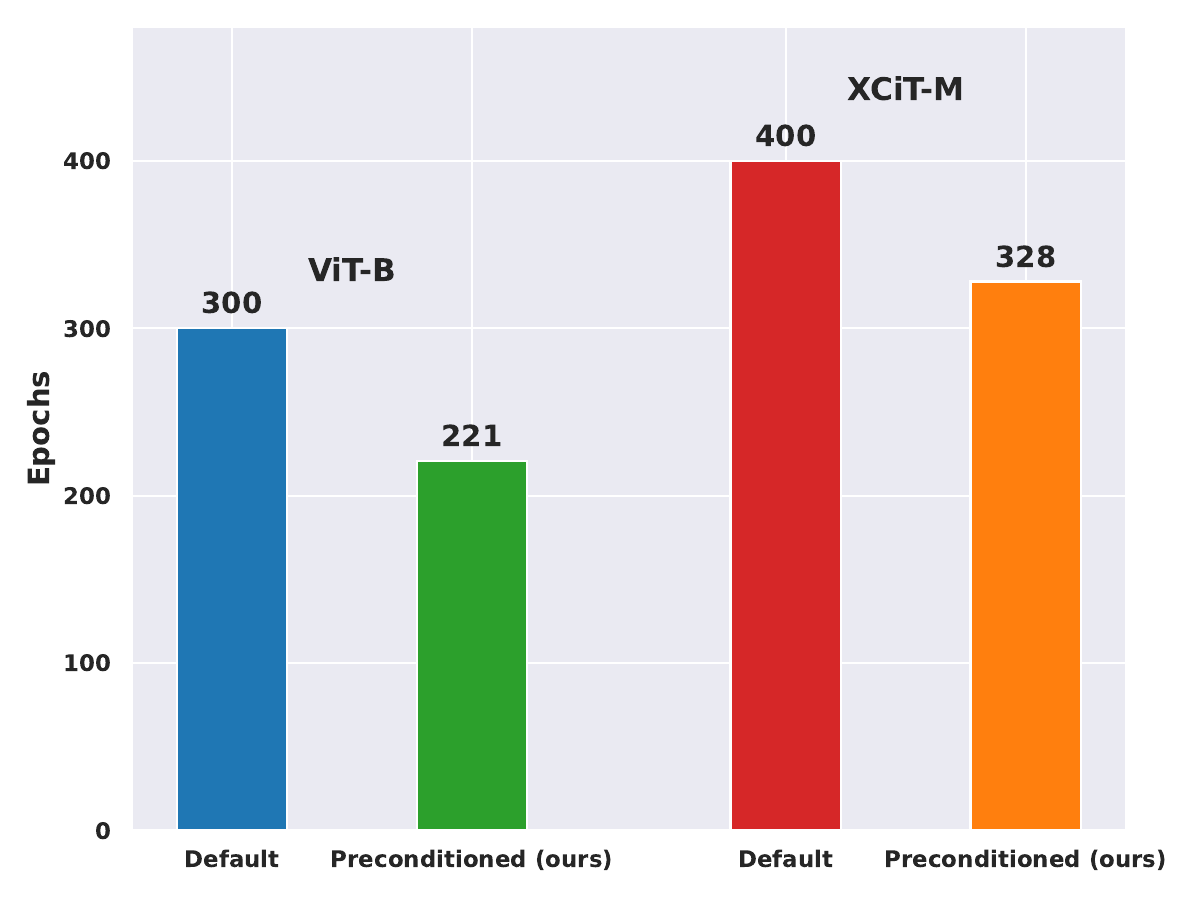}
    \hfill
    \includegraphics[width=0.48\linewidth]{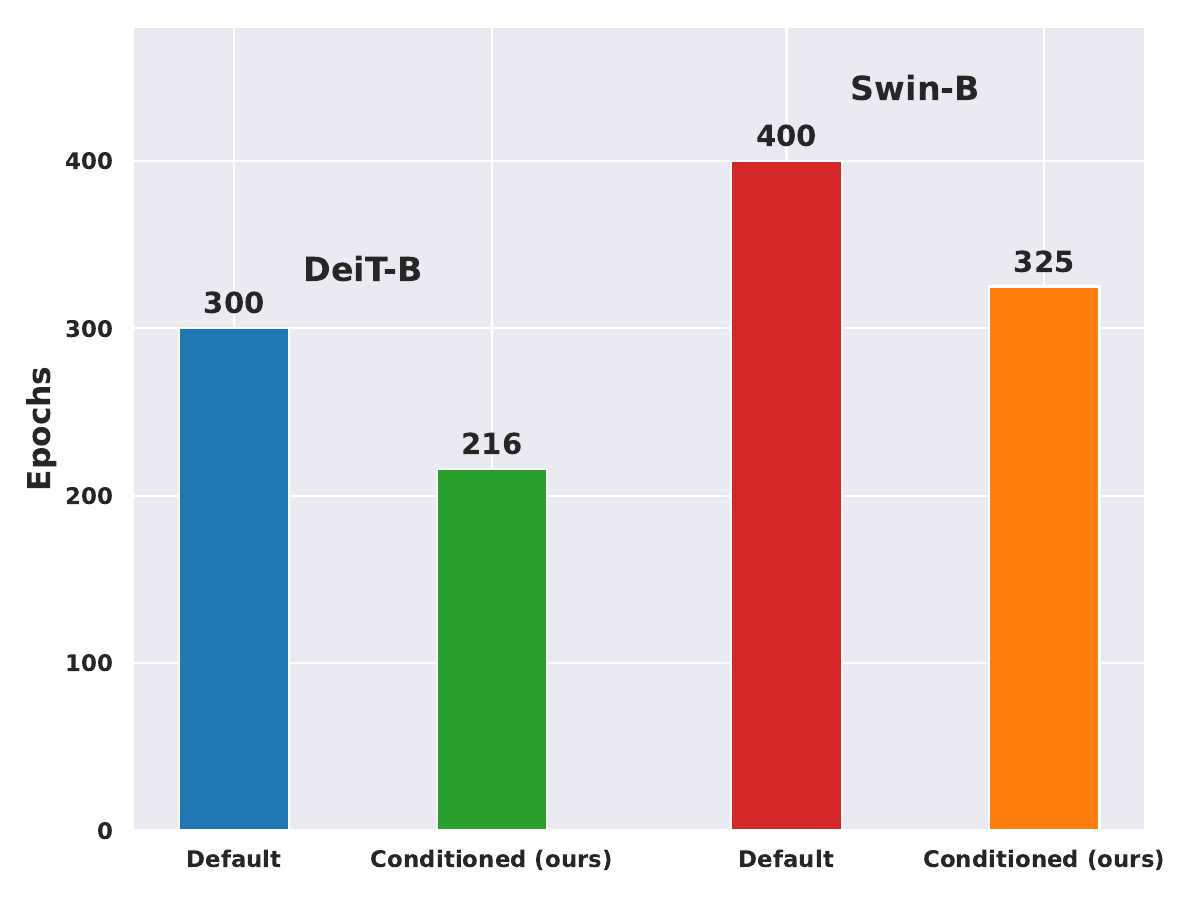}
    \caption{Total number of epochs required for each preconditioned model to reach the accuracy of the baseline model. For each ViT we see that the preconditioned model requires roughly 20-30\% less epochs.}
    \label{fig:vits_conv}
\end{figure*}

\paragraph{Results.} We compared five vision transformers using their original attention mechanisms against variants with preconditioned attention. ViT-Base~\citep{dosovitskiy2020image} and DeiT-Base~\citep{touvron2021training} employ standard self-attention, while Swin-Base~\citep{liu2021swin} and XCiT-Medium~\citep{ali2021xcit} use more specialized mechanisms. Including these latter models demonstrates that our method, though developed in the context of self-attention, extends as a drop-in replacement for modern attention designs. Full implementation details are provided in \cref{app:subsec_IC}. As shown in \cref{tab:vits}, preconditioned attention consistently improves Top-1\% accuracy on ImageNet-1k. 

\begin{figure}[ht!]
    \centering
    \includegraphics[width=1.\linewidth]
    {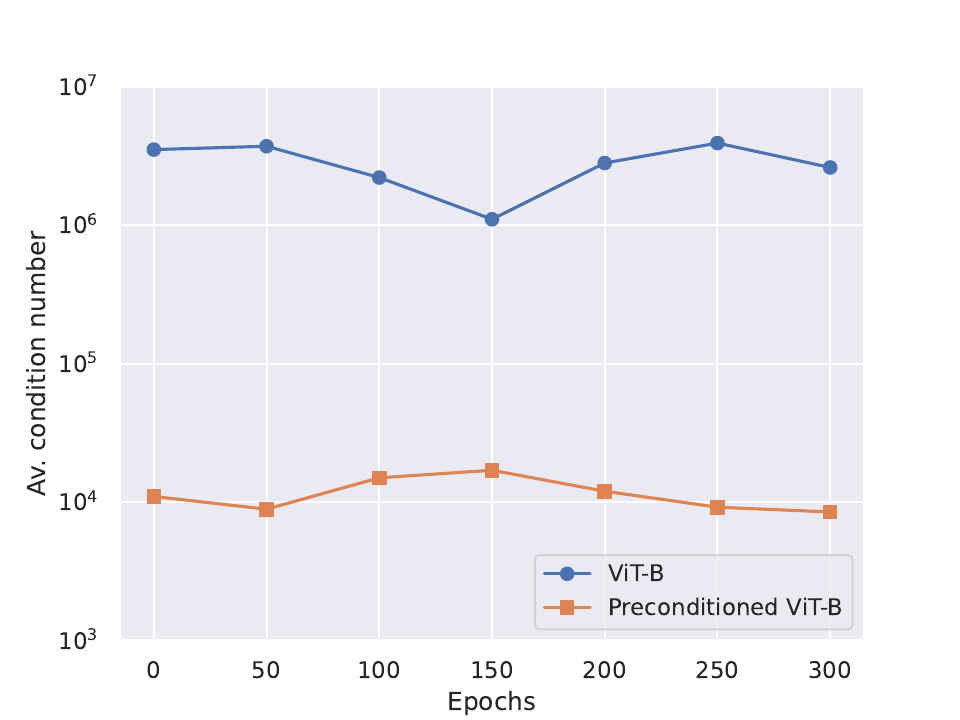}
    %\caption{Trained for 350s}
    %\end{subfigure}
    %\vspace{-2em}
    \caption{Average condition number of a ViT-B and a preconditioned ViT-B during training on the ImageNet-1k dataset.}
    \label{fig:vit_condition}
\end{figure}

\begin{table}[h!]
\caption{Training time/memory overhead for ViTs}
\centering
\resizebox{\columnwidth}{!}{%
\begin{tabular}{|c|c|c|}
\hline
\textbf{} & \textbf{Train Time}(hrs:mins) & \textbf{Memory} (GB) \\ \hline
ViT-Base      & 29:28     & 176.1     \\ 
Precond. ViT-Base      & 30:45      & 178.5      \\ \hline
DeiT-Base      & 26:16      & 173.6     \\ 
Precond. DeiT-Base      & 27:25      & 175.1      \\ \hline
Swin-Base      & 53:11      & 150.4      \\ \
Precond. Swin-Base      & 54:24      & 152.6      \\ \hline
XCiT-Med.      & 91:08     & 186.4      \\ 
Precond. XCiT-Med.      & 92:10   & 188.5      \\ \hline
\end{tabular}%
}
\label{tab:memory_vits}
\end{table}

\paragraph{Testing the theory.} We evaluated the effect of preconditioning by training both a ViT-B and a preconditioned ViT-B on the ImageNet-1k dataset. During training, we computed the condition number for each attention head in every layer, averaging first across the $12$ heads per layer and then across the $12$ layers. The results, shown in \cref{fig:vit_condition}, indicate that the preconditioned ViT-B consistently maintains a substantially lower average condition number throughout training. Furthermore, \cref{fig:vits_conv} compares convergence speed, showing that the preconditioned model reaches the baseline’s final accuracy in roughly $20$–$30\%$ fewer epochs. We note that preconditioned attention introduces additional FLOPs, as shown in \cref{subsec:cost},
leading to increased training time and memory usage as reported in \cref{tab:memory_vits}. However, the observed overhead remains modest, indicating that the performance gains come at only a minor computational cost.

\section{Transfer Learning for Detection and Segmentation}

\begin{table*}[!ht]
\caption{Performance on COCO object detection and instance segmentation (mini-val set) using Mask R-CNN with ImageNet-1k pretrained backbones. Reported metrics include bounding box AP (\(AP^b\)), AP at IoU thresholds 0.50 and 0.75 (\(AP^b_{50}\), \(AP^b_{75}\)), mask AP (\(AP^m\)), and mask AP at IoU thresholds 0.50 and 0.75 (\(AP^m_{50}\), \(AP^m_{75}\)).}
    \centering

    \setlength{\tabcolsep}{14pt}
    \begin{tabular}{c|c c c |c c c}
        \midrule
         Model & $AP^b$ & $AP^b_{50}$ & $AP^b_{75}$ & $AP^m$ & $AP^m_{50}$ & $AP^m_{75}$ \\
        \midrule
        XCiT-S12 & 44.9 & 66.1 & 48.9 & 40.1 & 63.1 & 42.8 \\
        \midrule
        Preconditioned XCiT-S12 & \textbf{45.4} & \textbf{66.4} & \textbf{49.5} & \textbf{40.5} & \textbf{63.3} & \textbf{43.2} \\
        \midrule
        XCiT-M24 & 45.7 & 66.8 & 49.6 & 40.8 & 63.6 & 43.3 \\
        \midrule
        Preconditioned XCiT-M24 & \textbf{46.0} & \textbf{67.2} & \textbf{49.9} & \textbf{41.2} & \textbf{63.9} & \textbf{44.0} \\
        \midrule
    \end{tabular}

    \label{tab:transferlearning}
\end{table*}

\begin{table*}[!ht]
  \caption{Comparison of a Nystr\"omformer with a preconditioned Nystr\"omformer on LRA benchmark. We report evaluation accuracy (\%).}
    \centering
    \setlength{\tabcolsep}{12pt}
    \begin{tabular}{c|c c c c c }
        \midrule
         Model & ListOps & Text & Retrieval & Image & Pathfinder \\
        \midrule
         Nystr\"omformer & 37.1 & 63.8 & 79.8 & 39.9 & 72.9 \\
        \midrule
        Preconditioned Nystr\"omformer & \textbf{37.9} & \textbf{64.8} & \textbf{80.7} & \textbf{40.8} & \textbf{73.8} \\
        \midrule
    \end{tabular}
    \label{tab:nystrom}
\end{table*}

\begin{figure*}[ht!]
    \centering
    \includegraphics[width=0.32\linewidth]{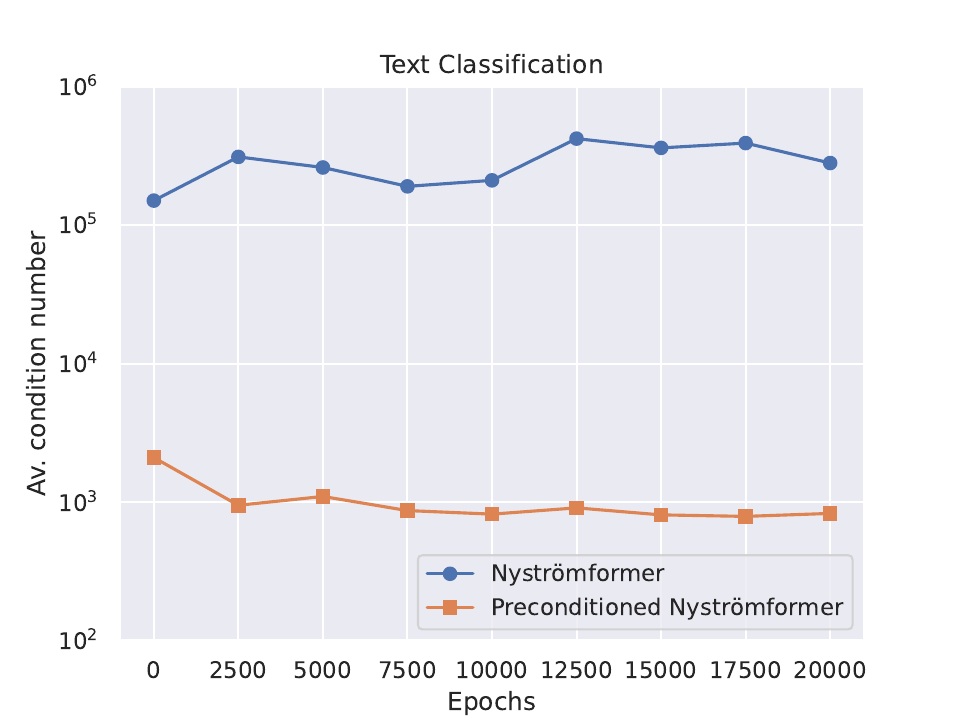}
    \hfill
    \includegraphics[width=0.32\linewidth]{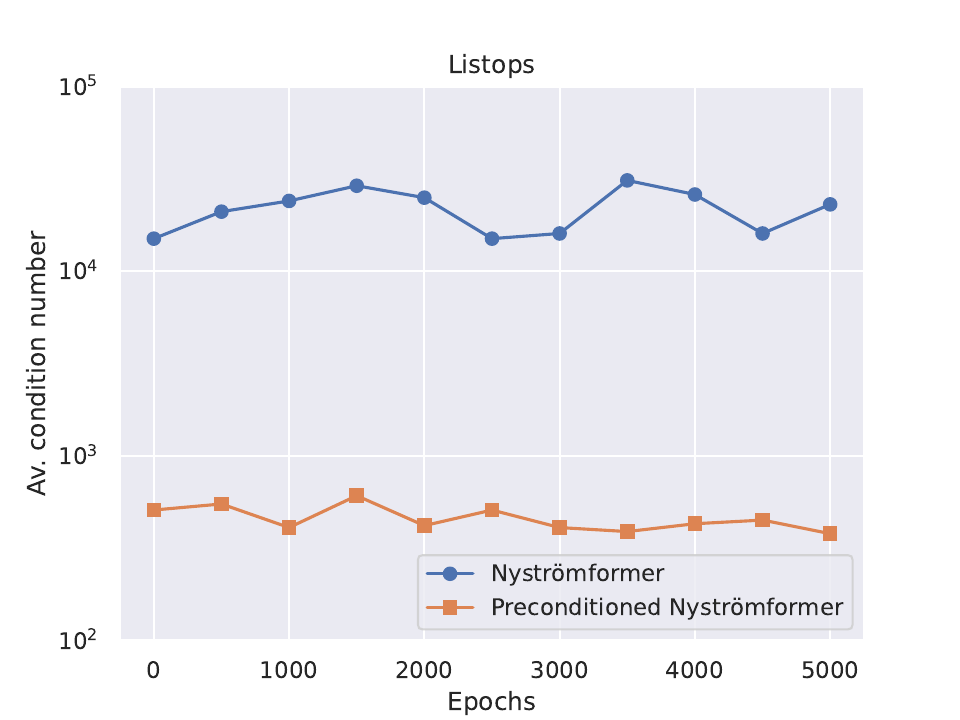}
    \hfill
    \includegraphics[width=0.32\linewidth]{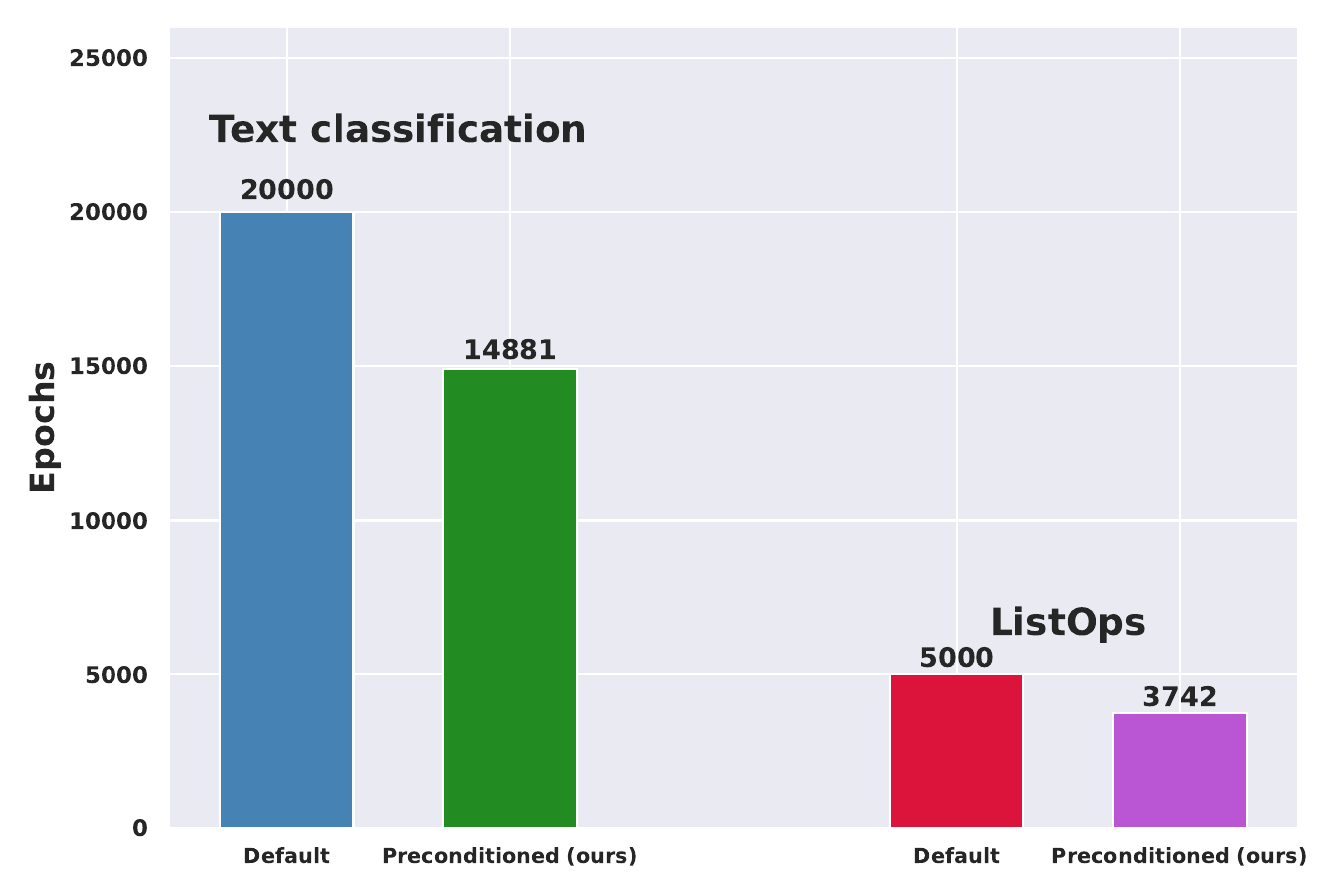}
    \caption{Conditioning analysis of the Nystr\"omformer on text classification and ListOps. The preconditioned variant achieves consistently lower condition numbers (left and middle) and converges faster, requiring fewer iterations to reach the baseline model’s final accuracy (right).}
    \label{fig:nystrom_condition}
\end{figure*}

In this section, we investigate the transfer learning ability of our models by evaluating them on object detection and instance segmentation. We fine-tune XCiT models pretrained on ImageNet-1k and assess their performance on the COCO 2017 benchmark~\citep{lin2014microsoft}, which contains 118K training images and 5K validation images across 80 object categories. The XCiT backbone is integrated into the Mask R-CNN framework~\citep{he2017mask}, equipped with a Feature Pyramid Network (FPN) to capture multi-scale representations.  

To interface XCiT with FPN, we adapt its columnar architecture by extracting intermediate features at multiple layers. For both XCiT-S12 and XCiT-M24, the native stride-16 outputs are converted into multi-scale features with strides \([4, 8, 16, 32]\). This is achieved through max pooling for downsampling and a transposed convolution for upsampling. Fine-tuning is carried out for 36 epochs with the AdamW optimizer, using a learning rate of \(10^{-4}\), weight decay of \(0.05\), and a batch size of 16. These experiments demonstrate that XCiT’s learned representations transfer effectively to detection and segmentation tasks, underscoring the flexibility of the architecture beyond image classification.  

\begin{table*}[h]
\setlength{\tabcolsep}{7pt}  % Wider column padding
\caption{Evaluation of pretrained Crammed BERT using the default baseline and a variant with preconditioned attention (as defined in \cref{alg:precond_attn}). The preconditioned model consistently outperforms the baseline across GLUE tasks.
}
\label{tab:bert_glue_results}
\centering
\begin{tabular}{lccccccccc}
    \toprule
    & MNLI & SST-2 & STSB & RTE & QNLI & QQP & MRPC & CoLA & Avg. \\
    \midrule
    Default     & 83.8 & 92.3 & 86.3 & 55.1 & 90.1 & 87.3 & 85.0 & 48.9 & 78.6 \\
    Preconditioned     & \textbf{84.9} & \textbf{92.9} & \textbf{87.5} & \textbf{55.9} & \textbf{90.9} & \textbf{88.4} & \textbf{85.9} & \textbf{50.0} & \textbf{79.6} \\
    \bottomrule
\end{tabular}
\end{table*}

\paragraph{Results.}  
We pretrained four models on the ImageNet-1k dataset: the standard XCiT-S12 and XCiT-M24, along with their counterparts incorporating a preconditioned attention layer. The latter modifies the attention mechanism by applying the diagonal preconditioner introduced in \cref{thm:precond_attn} to the final attention output before multiplication with the values matrix. The transfer learning results are summarized in \cref{tab:transferlearning}. Across both model sizes, the variants equipped with preconditioned attention consistently outperform their standard counterparts, highlighting the benefits of our proposed conditioning approach.

\subsection{Long Range Sequence Modeling}

Transformers often struggle with modeling long-range dependencies. The Long-Range Arena (LRA) benchmark~\citep{tay2020long} provides a standardized suite of five tasks—ListOps, text classification, retrieval, image classification, and Pathfinder, specifically designed to evaluate this capability. For detailed descriptions of these tasks, we refer the reader to \cite{tay2020long}. In our experiments, we use the Nystr\"omformer~\citep{xiong2021nystromformer}, which approximates attention in near-linear time, and compare its standard form against a variant enhanced with preconditioned attention.

\paragraph{Results.} We evaluated both the standard Nystr\"omformer and its preconditioned variant across all five tasks in the LRA benchmark suite. For each task, we strictly followed the training protocol of~\cite{xiong2021nystromformer}, as detailed in their public implementation~\cite{nystromformer_github}. The final accuracies are reported in \cref{tab:nystrom}, where the preconditioned Nystr\"omformer consistently outperforms the baseline across all tasks. For details on memory usage and training time overhead, see \cref{app:nystrom_sec}.

\paragraph{Testing the theory.}  
Our first experiment examines the conditioning of the attention matrix during training on the text and ListOps tasks. We compare the standard Nystr\"omformer with its preconditioned variant by tracking the condition number of each attention head. As shown in \cref{fig:nystrom_condition}, the preconditioned Nystr\"omformer consistently maintains lower condition numbers (left and middle) throughout training. The right panel of \cref{fig:nystrom_condition} reports the number of iterations required to match the final accuracy of the baseline model. In both tasks, the preconditioned variant converges more quickly, requiring fewer iterations to reach the same performance.

\subsection{Language Modeling}

Building on the theoretical insights from \cref{subsec:precond}, we evaluate preconditioned attention in the Crammed BERT language model~\citep{geiping2023cramming} trained with masked language modeling. We pretrain two variants on The Pile~\citep{gao2021pile}: a standard baseline and a model with preconditioned attention following the implementation in \cref{alg:precond_attn}. Their performance is then assessed on the GLUE benchmark~\citep{wang2018glue}.

\paragraph{Results.} \Cref{tab:bert_glue_results} reports the GLUE benchmark performance for the default Crammed BERT and the variant with preconditioned attention. The results show that preconditioned attention consistently yields higher scores across tasks, demonstrating its effectiveness.

\section{Limitations}

Our work introduces a preconditioner matrix aimed at improving the conditioning of the self-attention block in transformer layers. The design is motivated by \cref{thm:precond_attn}, which leverages the complexity measure in \cref{eqn:condition_complxity_quant} as an upper bound on the condition number. Although our experiments validate this measure as a useful proxy, it remains an indirect approach. A more explicit method for computing the condition number of the self-attention matrix could provide a stronger foundation for preconditioning and potentially lead to further performance gains.

\section{Conclusion}

We introduced a theoretical framework for analyzing the conditioning of the self-attention matrix in transformer layers and proposed a diagonal preconditioner that reduces its condition number. This yields a better-conditioned self-attention mechanism, which we termed \emph{preconditioned self-attention}. Our empirical results across image classification, long-range sequence modeling, and language modeling demonstrate that this approach consistently improves performance. Notably, preconditioned self-attention acts as a simple drop-in replacement for existing attention mechanisms, providing broad applicability and consistent gains.

%However, as shown in \cref{subsec:cost}, applying the preconditioner introduces computational overhead. For instance, \cref{tab:memory_vits} highlights the additional training time and memory requirements associated with conditioned attention. This raises an important question: can a more efficient method for computing the preconditioner be developed to reduce these overheads? We believe this is an interesting problem and plan to address it in future work.

\newpage
\bibliography{iclr2026_conference}

@article{vaswani2017attention,
  title={Attention is all you need},
  author={Vaswani, Ashish and Shazeer, Noam and Parmar, Niki and Uszkoreit, Jakob and Jones, Llion and Gomez, Aidan N and Kaiser, {\L}ukasz and Polosukhin, Illia},
  journal={Advances in neural information processing systems},
  volume={30},
  year={2017}
}

@book{nocedal1999numerical,
  title={Numerical optimization},
  author={Nocedal, Jorge and Wright, Stephen J},
  year={1999},
  publisher={Springer}
}

@inproceedings{saratchandran2025weight,
  title={Weight Conditioning for Smooth Optimization of Neural Networks},
  author={Saratchandran, Hemanth and Wang, Thomas X and Lucey, Simon},
  booktitle={European Conference on Computer Vision},
  pages={310--325},
  year={2025},
  organization={Springer}
}

@inproceedings{agarwal2021deep,
  title={A deep conditioning treatment of neural networks},
  author={Agarwal, Naman and Awasthi, Pranjal and Kale, Satyen},
  booktitle={Algorithmic Learning Theory},
  pages={249--305},
  year={2021},
  organization={PMLR}
}

@article{liu2022loss,
  title={Loss landscapes and optimization in over-parameterized non-linear systems and neural networks},
  author={Liu, Chaoyue and Zhu, Libin and Belkin, Mikhail},
  journal={Applied and Computational Harmonic Analysis},
  volume={59},
  pages={85--116},
  year={2022},
  publisher={Elsevier}
}

@article{saratchandran2024rethinking,
  title={Rethinking Attention: Polynomial Alternatives to Softmax in Transformers},
  author={Saratchandran, Hemanth and Zheng, Jianqiao and Ji, Yiping and Zhang, Wenbo and Lucey, Simon},
  journal={arXiv preprint arXiv:2410.18613},
  year={2024}
}

@article{zheng2025structured,
  title={Structured Initialization for Vision Transformers},
  author={Zheng, Jianqiao and Li, Xueqian and Saratchandran, Hemanth and Lucey, Simon},
  journal={arXiv preprint arXiv:2505.19985},
  year={2025}
}

@inproceedings{chng2025preconditioners,
  title={Preconditioners for the stochastic training of neural fields},
  author={Chng, Shin-Fang and Saratchandran, Hemanth and Lucey, Simon},
  booktitle={Proceedings of the Computer Vision and Pattern Recognition Conference},
  pages={27222--27232},
  year={2025}
}

@article{saratchandran2025leaner,
  title={Leaner transformers: More heads, less depth},
  author={Saratchandran, Hemanth and Teney, Damien and Lucey, Simon},
  journal={arXiv preprint arXiv:2505.20802},
  year={2025}
}

@inproceedings{saratchandran2025enhancing,
  title={Enhancing transformers through conditioned embedded tokens},
  author={Saratchandran, Hemanth and Lucey, Simon},
  booktitle={Proceedings of the IEEE/CVF International Conference on Computer Vision},
  pages={4786--4795},
  year={2025}
}

@article{saratchandran2026spectral,
  title={Spectral conditioning of attention improves transformer performance},
  author={Saratchandran, Hemanth and Lucey, Simon},
  journal={arXiv preprint arXiv:2603.07162},
  year={2026}
}

@article{qin2022cosformer,
  title={cosformer: Rethinking softmax in attention},
  author={Qin, Zhen and Sun, Weixuan and Deng, Hui and Li, Dongxu and Wei, Yunshen and Lv, Baohong and Yan, Junjie and Kong, Lingpeng and Zhong, Yiran},
  journal={arXiv preprint arXiv:2202.08791},
  year={2022}
}

@article{ji2025cutting,
  title={Cutting the Skip: Training Residual-Free Transformers},
  author={Ji, Yiping and Martens, James and Zheng, Jianqiao and Zhou, Ziqin and Moghadam, Peyman and Zhang, Xinyu and Saratchandran, Hemanth and Lucey, Simon},
  journal={arXiv preprint arXiv:2510.00345},
  year={2025}
}

@article{macdonald2023skip,
  title={On skip connections and normalisation layers in deep optimisation},
  author={MacDonald, Lachlan and Valmadre, Jack and Saratchandran, Hemanth and Lucey, Simon},
  journal={Advances in Neural Information Processing Systems},
  volume={36},
  pages={14705--14724},
  year={2023}
}

@article{ji2024sine,
  title={Sine activated low-rank matrices for parameter efficient learning},
  author={Ji, Yiping and Saratchandran, Hemanth and Gordon, Cameron and Zhang, Zeyu and Lucey, Simon},
  journal={arXiv preprint arXiv:2403.19243},
  year={2024}
}

@article{albert2025randlora,
  title={Randlora: Full-rank parameter-efficient fine-tuning of large models},
  author={Albert, Paul and Zhang, Frederic Z and Saratchandran, Hemanth and Rodriguez-Opazo, Cristian and Hengel, Anton van den and Abbasnejad, Ehsan},
  journal={arXiv preprint arXiv:2502.00987},
  year={2025}
}

@inproceedings{albert2025towards,
  title={Towards Higher Effective Rank in Parameter-Efficient Fine-tuning using Khatri-Rao Product},
  author={Albert, Paul and Zhang, Frederic Z and Saratchandran, Hemanth and van den Hengel, Anton and Abbasnejad, Ehsan},
  booktitle={Proceedings of the IEEE/CVF International Conference on Computer Vision},
  pages={1292--1302},
  year={2025}
}

@article{ali2021xcit,
  title={Xcit: Cross-covariance image transformers},
  author={Ali, Alaaeldin and Touvron, Hugo and Caron, Mathilde and Bojanowski, Piotr and Douze, Matthijs and Joulin, Armand and Laptev, Ivan and Neverova, Natalia and Synnaeve, Gabriel and Verbeek, Jakob and others},
  journal={Advances in neural information processing systems},
  volume={34},
  pages={20014--20027},
  year={2021}
}

@inproceedings{liu2021swin,
  title={Swin transformer: Hierarchical vision transformer using shifted windows},
  author={Liu, Ze and Lin, Yutong and Cao, Yue and Hu, Han and Wei, Yixuan and Zhang, Zheng and Lin, Stephen and Guo, Baining},
  booktitle={Proceedings of the IEEE/CVF international conference on computer vision},
  pages={10012--10022},
  year={2021}
}

@inproceedings{touvron2021training,
  title={Training data-efficient image transformers \& distillation through attention},
  author={Touvron, Hugo and Cord, Matthieu and Douze, Matthijs and Massa, Francisco and Sablayrolles, Alexandre and J{\'e}gou, Herv{\'e}},
  booktitle={International conference on machine learning},
  pages={10347--10357},
  year={2021},
  organization={PMLR}
}

@article{yuan2022volo,
  title={Volo: Vision outlooker for visual recognition},
  author={Yuan, Li and Hou, Qibin and Jiang, Zihang and Feng, Jiashi and Yan, Shuicheng},
  journal={IEEE transactions on pattern analysis and machine intelligence},
  volume={45},
  number={5},
  pages={6575--6586},
  year={2022},
  publisher={IEEE}
}

@inproceedings{ding2022davit,
  title={Davit: Dual attention vision transformers},
  author={Ding, Mingyu and Xiao, Bin and Codella, Noel and Luo, Ping and Wang, Jingdong and Yuan, Lu},
  booktitle={European conference on computer vision},
  pages={74--92},
  year={2022},
  organization={Springer}
}

@inproceedings{xiong2021nystromformer,
  title={Nystr{\"o}mformer: A nystr{\"o}m-based algorithm for approximating self-attention},
  author={Xiong, Yunyang and Zeng, Zhanpeng and Chakraborty, Rudrasis and Tan, Mingxing and Fung, Glenn and Li, Yin and Singh, Vikas},
  booktitle={Proceedings of the AAAI Conference on Artificial Intelligence},
  volume={35},
  pages={14138--14148},
  year={2021}
}

@article{guggenheimer1995simple,
  title={A simple estimate of the condition number of a linear system},
  author={Guggenheimer, Heinrich W and Edelman, Alan S and Johnson, Charles R},
  journal={The College Mathematics Journal},
  volume={26},
  number={1},
  pages={2--5},
  year={1995},
  publisher={Taylor \& Francis}
}

@article{dosovitskiy2020image,
  title={An image is worth 16x16 words},
  author={Dosovitskiy, Alexey and Beyer, Lucas and Kolesnikov, Alexander and Weissenborn, Dirk and Zhai, Xiaohua and Unterthiner, Thomas and Dehghani, Mostafa and Minderer, Matthias and Heigold, Georg and Gelly, Sylvain and others},
  journal={arXiv preprint arXiv:2010.11929},
  volume={7},
  year={2020}
}

@inproceedings{lin2014microsoft,
  title={Microsoft coco: Common objects in context},
  author={Lin, Tsung-Yi and Maire, Michael and Belongie, Serge and Hays, James and Perona, Pietro and Ramanan, Deva and Doll{\'a}r, Piotr and Zitnick, C Lawrence},
  booktitle={Computer Vision--ECCV 2014: 13th European Conference, Zurich, Switzerland, September 6-12, 2014, Proceedings, Part V 13},
  pages={740--755},
  year={2014},
  organization={Springer}
}

@inproceedings{he2017mask,
  title={Mask r-cnn},
  author={He, Kaiming and Gkioxari, Georgia and Doll{\'a}r, Piotr and Girshick, Ross},
  booktitle={Proceedings of the IEEE international conference on computer vision},
  pages={2961--2969},
  year={2017}
}

@article{tay2020long,
  title={Long range arena: A benchmark for efficient transformers},
  author={Tay, Yi and Dehghani, Mostafa and Abnar, Samira and Shen, Yikang and Bahri, Dara and Pham, Philip and Rao, Jinfeng and Yang, Liu and Ruder, Sebastian and Metzler, Donald},
  journal={arXiv preprint arXiv:2011.04006},
  year={2020}
}

@misc{nystromformer_github,
  author       = {Xiong, Yunyang and Zeng, Zhanpeng and Chakraborty, Rudrasis and Tan, Fei and Fung, Glenn and Singh, Vikas and Yuan, Xiaodong and Wang, Sungsoo Ahn and Papailiopoulos, Dimitris and Fragkiadaki, Katerina},
  title        = {GitHub repository},
  year         = {2021},
  url          = {https://github.com/mlpen/Nystromformer},
}

@misc{rw2019timm,
  author = {Ross Wightman},
  title = {PyTorch Image Models},
  year = {2019},
  publisher = {GitHub},
  journal = {GitHub repository},
  doi = {10.5281/zenodo.4414861},
  howpublished = {\url{https://github.com/rwightman/pytorch-image-models}}
}

@article{ji2025always,
  title={Always skip attention},
  author={Ji, Yiping and Saratchandran, Hemanth and Moghadam, Peyman and Lucey, Simon},
  journal={arXiv preprint arXiv:2505.01996},
  year={2025}
}

@article{jacot2018neural,
  title={Neural tangent kernel: Convergence and generalization in neural networks},
  author={Jacot, Arthur and Gabriel, Franck and Hongler, Cl{\'e}ment},
  journal={Advances in neural information processing systems},
  volume={31},
  year={2018}
}

@inproceedings{geiping2023cramming,
  title={Cramming: Training a Language Model on a single GPU in one day.},
  author={Geiping, Jonas and Goldstein, Tom},
  booktitle={International Conference on Machine Learning},
  pages={11117--11143},
  year={2023},
  organization={PMLR}
}

@article{gao2021pile,
  title={The Pile: An 800GB Dataset of Diverse Text for Language Modeling},
  author={Gao, Leo and Biderman, Stella and Black, Sid and Golding, Laurence and Hoppe, Travis and Foster, Charles and Phang, Jason and He, Horace and Thite, Aadi and Nabeshima, Eric and others},
  journal={arXiv preprint arXiv:2101.00027},
  year={2021}
}

@article{wang2018glue,
  title={GLUE: A multi-task benchmark and analysis platform for natural language understanding},
  author={Wang, Alex and Singh, Amanpreet and Michael, Julian and Hill, Felix and Levy, Omer and Bowman, Samuel R},
  journal={arXiv preprint arXiv:1804.07461},
  year={2018}
}
\bibliographystyle{iclr2026_conference}

%\begin{thebibliography}{}
%\setlength{\itemindent}{-\leftmargin}
%\makeatletter\renewcommand{\@biblabel}[1]{}\makeatother
%\bibitem{} J.~Alspector, B.~Gupta, and R.~B.~Allen (1989).
%    \newblock Performance of a stochastic learning microchip.
%    \newblock In D. S. Touretzky (ed.),
%    \textit{Advances in Neural Information Processing Systems 1}, 748--760.
%    San Mateo, Calif.: Morgan Kaufmann.

%\bibitem{} F.~Rosenblatt (1962).
%    \newblock \textit{Principles of Neurodynamics.}
%    \newblock Washington, D.C.: Spartan Books.

%\bibitem{} G.~Tesauro (1989).
%    \newblock Neurogammon wins computer Olympiad.
%    \newblock \textit{Neural Computation} \textbf{1}(3):321--323.
%\end{thebibliography}

%%%%%%%%%%%%%%%%%%%%%%%%%%%%%%%%%%%%%%%%%%%%%%%%%%%%%%%%%%%%

\clearpage
\appendix
\thispagestyle{empty}

% Supplementary material: To improve readability, you must use a single-column format for the supplementary material.
\onecolumn
\aistatstitle{Appendix}

\section{Theoretical Analysis}\label{app:theory}

In this section we give the proofs of \cref{thm:condition_self_attn} and \cref{thm:precond_attn}.

We will start by giving the proof of \cref{thm:condition_self_attn}. In order to do this we will need a lemma about the $\mathbf{softmax}$ activation function.

\begin{lemma}\label{lem:softmax_frob}
    Let $A \in \R^{n \times d}$ be an arbitrary $n\times d$ matrix. Let $\mathbf{softmax}(A)$ denote the matrix given by applying $\mathbf{softmax}$ row wise. Then we have that
    \begin{equation}
        \vert\vert \mathbf{softmax}(A)\vert\vert_F \leq \sqrt{n}.
    \end{equation}
\end{lemma}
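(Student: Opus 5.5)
The plan is to bound the Frobenius norm row by row, using only the fact that each row of $\mathbf{softmax}(A)$ is a probability vector. We have
\begin{equation}
\vert\vert \mathbf{softmax}(A)\vert\vert_F^2 = \sum_{i=1}^n \vert\vert \mathbf{softmax}(A)_{i,}\vert\vert_2^2,
\end{equation}
so it suffices to show that each row satisfies $\vert\vert \mathbf{softmax}(A)_{i,}\vert\vert_2 \leq 1$.

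Fix a row $i$ and write $p_j = e^{A_{ij}}/\sum_{l=1}^d e^{A_{il}}$ for $1 \leq j \leq d$. By construction every $p_j$ lies in $[0,1]$, indeed in $(0,1]$, and $\sum_j p_j = 1$. Since $0 \leq p_j \leq 1$, we have $p_j^2 \leq p_j$. Summing over $j$ gives
\begin{equation}
\sum_{j=1}^d p_j^2 \leq \sum_{j=1}^d p_j = 1.
\end{equation}
This is the row-wise bound $\vert\vert \mathbf{softmax}(A)_{i,}\vert\vert_2 \leq 1$.

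Summing this bound over the $n$ rows gives $\vert\vert \mathbf{softmax}(A)\vert\vert_F^2 \leq n$. Taking square roots yields the claim $\vert\vert \mathbf{softmax}(A)\vert\vert_F \leq \sqrt{n}$.

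No step here is a real obstacle. The only point needing care is the elementary inequality $p_j^2 \leq p_j$, which follows from the softmax entries lying in $[0,1]$. An alternative route is $\vert\vert p\vert\vert_2 \leq \vert\vert p\vert\vert_1 = 1$, which follows because all entries are nonnegative.
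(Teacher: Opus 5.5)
Your proof is correct and takes essentially the same row-by-row approach as the paper. The paper bounds each row via $\sum_j B_{ij}^2 \leq (\sum_j B_{ij})^2 = 1$, which is exactly the alternative $\vert\vert p\vert\vert_2 \leq \vert\vert p\vert\vert_1 = 1$ you mention; your main step $p_j^2 \leq p_j$ is an equally valid one-line variant.
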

\begin{proof}
For ease of notation let $B = \mathbf{softmax}(A)$ which is an $n \times d$ matrix. Let $B_{ij}$ denote the entry of $B$ in the i-th row and jth column. We then have 
\begin{align}
    \vert\vert B\vert\vert_F^2 &= |B_{11}|^2 + \cdots + |B_{1d}|^2 \\
    &\hspace{0.5cm} + \\
    &\hspace{0.5cm} \vdots \\
    &\hspace{0.5cm} + \\
    &\hspace{0.5cm}|B_{n1}|^2 + \cdots + |B_{nd}|^2. 
\end{align}
Then observe that for each $1 \leq i \leq n$ we have that
\begin{align}
|B_{i1}|^2 + \cdots + |B_{id}|^2 &\leq  (|B_{i1}| + \cdots + |B_{id}|)^2 \\
&= 1 
\end{align}
where the equality comes from the fact that we applied 
$\mathbf{softmax}$ row wise. Doing this for each of the $n$ rows of $B$ we get
\begin{equation}
   \vert\vert B\vert\vert_F^2 \leq n 
\end{equation}
which gives the desired result.
\end{proof}

\begin{proof}[Proof of \cref{thm:condition_self_attn}]
We start by using Guggenheimer et al. result from \cref{thm:guggen} and compute the term
\begin{equation}
\bigg{(}\frac{\vert\vert \mathbf{softmax}(qk^T)v\vert\vert_F}{\sqrt{k}} \bigg{)}^k.
\end{equation}
We observe that $\vert\vert \mathbf{softmax}(qk^T)v\vert\vert_F 
\leq \vert\vert \mathbf{softmax}(qk^T)\vert\vert_F\cdot  \vert\vert v\vert\vert_F$ and by \cref{lem:softmax_frob} we have that
$\vert\vert \mathbf{softmax}(qk^T)\vert\vert_F \leq \sqrt{n}$. This implies that 
\begin{equation}
\bigg{(}\frac{\vert\vert \mathbf{softmax}(qk^T)v\vert\vert_F}{\sqrt{k}} \bigg{)}^k \leq 
\bigg{(}\frac{\sqrt{n}\vert\vert v\vert\vert_F}{\sqrt{k}} \bigg{)}^k
\end{equation}
which proves the theorem.
\end{proof}

We can now give the proof of \cref{thm:precond_attn}. In order to do this we will need the following lemma.

\begin{lemma}\label{lem:sum_posi}
Let $a_1, \ldots, a_n$ be $n$ non-negative numbers such that
$a_1 \leq \cdots \leq a_n$ and such that
\begin{equation}
    a_1 + \cdots + a_n = n.
\end{equation}
Then for any $1 \leq k \leq n$ we must have that
\begin{equation}
a_1 + \cdots + a_k \leq k.
\end{equation}
\end{lemma}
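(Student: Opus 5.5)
The idea is that the $k$ smallest terms of a sorted list cannot have an average larger than the average of the whole list, which is $1$. I would formalize this by comparing the partial sum $S_k := a_1 + \cdots + a_k$ with the remaining sum $a_{k+1} + \cdots + a_n$, using the ordering $a_1 \leq \cdots \leq a_n$.

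First, each of the first $k$ terms is at most $a_k$, so $S_k \leq k\, a_k$. Second, each of the last $n-k$ terms is at least $a_k$, so
\begin{equation*}
n - S_k = a_{k+1} + \cdots + a_n \geq (n-k)\, a_k .
\end{equation*}
Multiplying the first inequality by $(n-k)$ and the second by $k$ gives
\begin{equation*}
(n-k) S_k \leq k(n-k) a_k \leq k(n - S_k),
\end{equation*}
which rearranges to $n S_k \leq k n$. Dividing by $n > 0$ yields $S_k \leq k$.

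The case $k = n$ is immediate, since then $S_n = n$ by hypothesis. The argument above also covers it, because the second inequality degenerates to $0 \geq 0$.

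There is no genuine obstacle here. The only care needed is to keep the inequalities pointing the right way when combining them, and this is valid because $k \geq 0$ and $n - k \geq 0$. The non-negativity of the $a_i$ is not actually used; only the ordering and the total sum matter.

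Alternatively, one could argue by contradiction. If $S_k > k$, then $a_k > 1$, so every $a_j$ with $j > k$ also exceeds $1$. The total would then exceed $k + (n-k) = n$, a contradiction. I would present whichever version reads more cleanly, probably the direct averaging one.
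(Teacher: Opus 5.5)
Your proof is correct and is essentially the paper's approach. The paper argues by contradiction exactly as in your alternative: some $a_i>1$ among the first $k$ forces $a_{k+1}+\cdots+a_n \geq n-k$, contradicting the bound $a_{k+1}+\cdots+a_n<n-k$. Your main argument is a direct repackaging of the same head-versus-tail comparison pivoted at $a_k$, and you are right that non-negativity is never used.
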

\begin{proof}
Suppose that $a_1 + \cdots + a_k > k$. We then have that
\begin{align}
    n = a_1 + \cdots + a_n > k + a_{n+1} + \cdots + a_n 
\end{align}
which implies
\begin{equation}\label{eqn:sum_contra}
    a_{k+1} + \cdots + a_n < n-k
\end{equation}
However, since we are assuming $a_1 + \cdots + a_k > k$ we must have that at least one of the $a_i > 1$ for $1 \leq i \leq k$. This then gives a contradiction as each $a_{k+j} \geq a_i$ for 
$1 \leq i \leq k$ and $1 \leq j \leq n-k$ which then implies
\begin{equation}
    a_{k+1} + \cdots + a_n \geq (n-k)a_i \geq n-k
\end{equation}
which contradicts \cref{eqn:sum_contra}. Therefore, we must have that
$a_1 + \cdots + a_k \leq k$ which proves the result.
\end{proof}

\begin{proof}[Proof of \cref{thm:precond_attn}]
Our goal is to prove that 
$\mu(C\cdot \mathbf{softmax}(qk^T)v) \leq 
\mu(\mathbf{softmax}(qk^T)v)$ where $\mu$ is defined by 
\cref{eqn:condition_complxity_quant}. For this proof, let 
$A := \mathbf{softmax}(qk^T)v$. It suffices for us to prove that
\begin{equation}
    \frac{\mu(A)}{\mu(CA)} \geq 1.
\end{equation}

Writing out the definitions of $\mu(A)$ and $\mu(C)$ we want to show that 
\begin{equation}
\bigg{(}\frac{2}{\sigma_1(A)\cdots\sigma_k(A)}\bigg{(} 
\frac{\vert\vert A\vert\vert_F}{\sqrt{k}}
\bigg{)}^k\bigg{)}\Bigg{/} 
\bigg{(}\frac{2}{\sigma_1(CA)\cdots\sigma_k(CA)}\bigg{(} 
\frac{\vert\vert CA\vert\vert_F}{\sqrt{k}}
\bigg{)}^k\bigg{)} \geq 1.
\end{equation}
Observe that by since $C$ is a diagonal $n \times n$ matrix whose i-th diagonal element is $1/||A_{i,}||_2$ we have that 
$\vert\vert CA\vert\vert_F = \sqrt{n}$. Furthermore, observe that for any positive number $\lambda > 0$ we have that
\begin{equation}\label{eqn:scale_inv}
    \mu(\lambda A) = \mu(A)
\end{equation}
which follows from the fact that 
$||\lambda A||_F = \lambda ||A||_F$ and that for each 
$1 \leq i \leq k$ we have $\sigma_i(\lambda A) = 
\lambda\sigma_i(A)$. This implies we can assume 
$\vert\vert A\vert\vert_F = a\cdot\sqrt{n}$ for some fixed $a > 0$. For if not we can just scale $A$ appropriately and work with the scaled $A$ due to \cref{eqn:scale_inv}. We then find that
\begin{equation}
\bigg{(}\frac{2}{\sigma_1(A)\cdots\sigma_k(A)}\bigg{(} 
\frac{\vert\vert A\vert\vert_F}{\sqrt{k}}
\bigg{)}^k\bigg{)}\Bigg{/} 
\bigg{(}\frac{2}{\sigma_1(CA)\cdots\sigma_k(CA)}\bigg{(} 
\frac{\vert\vert CA\vert\vert_F}{\sqrt{k}}
\bigg{)}^k\bigg{)} = 
\frac{\sigma_1(CA)\cdots \sigma_k(CA)\cdot a^k}{\sigma_1(A)\cdots \sigma_k(A)}.
\end{equation}
We then use the fact that for any matrices we have
\begin{equation}
    \sigma_i(CA) \geq \sigma_{\min}(C)\sigma_i(A)
\end{equation}
where $\sigma_{\min}(C)$ denotes the smallest singular value of $C$. Note that by definition $C$ must have rank $n$ as it is a $n \times n$ diagonal matrix built out of the inverses of the 2-norms of $A$ and hence no diagonal entry of $C$ can be zero.

We then obtain
\begin{equation}
\frac{\sigma_1(CA)\cdots \sigma_k(CA)\cdot a^k}{\sigma_1(A)\cdots \sigma_k(A)} \geq \sigma_{\min}(C)^k\cdot a^k.    
\end{equation}
It now follows that by choosing $a > 0$, in particular choosing 
$a \geq \frac{1}{\sigma_{\min}(C)}$,
large enough we can obtain 
\begin{equation}
    \sigma_{\min}(C)^k\cdot a^k \geq 1
\end{equation}
which completes the proof.

\end{proof}

\section{Experiments}

Each experiment in \cref{sec:experiments} was repeated five times with different random seeds, and the reported results correspond to the mean across these runs.

\subsubsection{Image Classification}\label{app:subsec_IC}

\paragraph{Hardware and Implementation.} Each image classification experiment was run five time with five different random seeds and the mean results given in \cref{sec:IC}. The hyperparameters and training methodology for each ViT followed the original papers:
ViT~\citep{dosovitskiy2020image}, DeiT~\citep{touvron2021training}, Swin~\citep{liu2021swin}, and XCiT~\citep{ali2021xcit}. All the experiments were carried out on Nvidia A100 GPUs.
The implementation of the ViTs were all done using the Timm code base \cite{rw2019timm}

\subsubsection{Object detection and instance segmentation}\label{app:object}

\paragraph{Hardware and Implementation.} All the experiments were carried out on Nvidia A100 GPUs. The implementation followed  \cite{he2017mask} using the GitHub: 
\url{https://github.com/matterport/Mask_RCNN}

\subsubsection{Nystr\"omformer on LRA benchmark}\label{app:nystrom_sec}

\paragraph{Hardware and Implementation:} All the experiments were carried out on Nvidia A100 GPUs following the implementation and hyperparameter settings given in \cite{nystromformer_github}. \cref{tab:nyst_mem} shows the memory overhead for using a preconditioned  Nystr\"omformer and \cref{tab:nyst_time} shows the total training time on the whole LRA benchmark.

\begin{table}[h]
    \caption{Memory used for training Nystr\"omformer and preconditioned Nystr\"omformer on LRA benchmark.}
    \centering
    \begin{tabular}{|c|c|c|c|c|c|}
        \hline
        {} & Listops (GB) & Text Class. (GB) & Retrieval (GB) & Image Class. (GB) & Pathfinder (GB) \\
        \hline
        Original & 5.6 & 3.6 & 13.8 & 9.6 & 9.6 \\
        \hline
        Preconditioned & 5.8 & 3.6 & 15.8 & 9.6 & 9.6 \\
        \hline
    \end{tabular}
    \label{tab:nyst_mem}
\end{table}

\begin{table}[h]
 \caption{Total training time for Nystr\"omformer and preconditioned Nystr\"omformer on LRA benchmark.}
    \centering
    \begin{tabular}{|c|c|}
        \hline
         & Total train time on LRA (hrs:mins) \\
        \hline
        Nystr\"omformer & 8:30 \\
        \hline
        preconditioned Nystr\"omformer & 8:37 \\
        \hline
    \end{tabular}
    \label{tab:nyst_time}
\end{table}

\subsection{Language modeling}\label{app:language}

\paragraph{Hardware and Implementation:} All the experiments were carried out on a Nvidia A6000 GPU following the implementation and hyperparameter settings given in 
\cite{geiping2023cramming}.

\end{document}